\tikzset{>=latex}
\newtheorem{prop}{Proposition}
\newcommand{\EE}{\mathbb{E}}
\newenvironment{sketchproof}{%
  \proof}{\endproof}
\newcommand{\figureHeight}{5cm}
\title{Adversarially Learned Inference}
\author{
  Vincent Dumoulin$^{1}$, Ishmael Belghazi$^{1}$, Ben Poole$^{2}$\\
  {\bf Olivier Mastropietro$^{1}$}, {\bf Alex Lamb$^{1}$}, {\bf Martin Arjovsky$^{3}$}\\
  {\bf Aaron Courville$^{1\dagger}$}\\
  $^1$ MILA, Universit\'e de Montr\'eal, \texttt{firstname.lastname@umontreal.ca}.\\
  $^2$ Neural Dynamics and Computation Lab, Stanford, \texttt{poole@cs.stanford.edu}. \\
  $^3$ New York University, \texttt{martinarjovsky@gmail.com}. \\
  $^\dagger$CIFAR Fellow.\\
}
\begin{document}

\maketitle

\begin{abstract}
We introduce the adversarially learned inference (ALI) model, which jointly
learns a generation network and an inference network using an adversarial
process. The generation network maps samples from stochastic latent variables to
the data space while the inference network maps training examples in data space
to the space of latent variables. An adversarial game is cast between these two
networks and a discriminative network is trained to distinguish between
joint latent/data-space samples from the generative network and joint samples
from the inference network.  We illustrate the ability of the model to learn
mutually coherent inference and generation networks through the inspections of
model samples and reconstructions and confirm the usefulness of the learned
representations by obtaining a performance competitive with state-of-the-art
on the semi-supervised SVHN and CIFAR10 tasks.
\end{abstract}

\section{Introduction}

Deep directed generative model has emerged as a powerful framework for modeling
complex high-dimensional datasets. These models permit fast ancestral sampling,
but are often challenging to learn due to the complexities of inference.
Recently, three classes of algorithms have emerged as effective for learning
deep directed generative models: 1) techniques based on the Variational
Autoencoder (VAE) that aim to improve the quality and efficiency of inference by
learning an inference machine~\citep{kingma2013auto,rezende2014stochastic}, 2)
techniques based on Generative Adversarial Networks (GANs) that bypass inference
altogether~\citep{goodfellow2014generative} and 3) autoregressive
approaches~\citep{van2016pixel,van2016pixelcnn,van2016wavenet} that forego
latent representations and instead model the relationship between input
variables directly. While all techniques are provably consistent given infinite
capacity and data, in practice they learn very different kinds of generative
models on typical datasets.

VAE-based techniques learn an approximate inference mechanism that allows reuse
for various auxiliary tasks, such as semi-supervised learning or inpainting.
They do however suffer from a well-recognized issue of the maximum likelihood
training paradigm when combined with a conditional independence assumption on
the output given the latent variables: they tend to distribute probability mass
diffusely over the data space~\citep{Theis2015}. The direct consequence of this
is that image samples from VAE-trained models tend to be
blurry~\citep{goodfellow2014generative,larsen2015autoencoding}. Autoregressive
models produce outstanding samples but do so at the cost of slow sampling speed
and foregoing the learning of an abstract representation of the data. GAN-based
approaches represent a good compromise: they learn a generative model that
produces higher-quality samples than the best VAE
techniques~\citep{radford2015unsupervised,larsen2015autoencoding} without
sacrificing sampling speed and also make use of a latent representation in the
generation process. However, GANs lack an efficient inference mechanism, which
prevents them from reasoning about data at an abstract level. For instance, GANs
don't allow the sort of neural photo manipulations showcased
in~\citep{brock2016neural}. Recently, efforts have aimed to bridge the gap
between VAEs and GANs, to learn generative models with higher-quality samples
while learning an efficient inference
network~\citep{larsen2015autoencoding,lamb2016discriminative,
dosovitskiy2016generating}. While this is certainly a promising research
direction, VAE-GAN hybrids tend to manifest a compromise of the strengths and
weaknesses of both approaches.

In this paper, we propose a novel approach  to integrate efficient inference
within the GAN framework. Our approach, called Adversarially Learned Inference
(ALI), casts the learning of both an inference machine (or encoder) and a deep
directed generative model (or decoder) in an GAN-like adversarial framework. A
discriminator is trained to discriminate joint samples of the data and the
corresponding latent variable from the encoder (or approximate posterior) from
joint samples from the decoder while in opposition, the encoder and the decoder
are trained together to fool the discriminator. Not only are we asking the
discriminator to distinguish synthetic samples from real data, but we are
requiring it to distinguish between two joint distributions over the data space
and the latent variables.

With experiments on the Street View House Numbers (SVHN) dataset
\citep{netzer2011reading}, the CIFAR-10 object recognition dataset
\citep{krizhevsky2009learning}, the CelebA face dataset~\citep{liu2015deep} and
a downsampled version of the ImageNet dataset~\citep{russakovsky2015imagenet},
we show qualitatively that we maintain the high sample fidelity associated with
the GAN framework, while gaining the ability to perform efficient inference. We
show that the learned representation is useful for auxiliary tasks by achieving
results competitive with the state-of-the-art on the semi-supervised SVHN and
CIFAR10 tasks.

\section{Adversarially learned inference}

\begin{figure}[t]
    \centering
    \begin{tikzpicture}[remember picture,node distance=2cm,
                        box/.style={draw,rectangle,rounded corners}]
        \node[box,rectangle] (q) {
            \begin{tikzpicture}
                \node (x) {$\bm{x} \sim q(\bm{x})$};
                \node[above=of x]
                    (z_hat) {$\hat{\bm{z}} \sim q(\bm{z} \mid \bm{x})$};
            \end{tikzpicture}
        };
        \node[box,minimum height=1cm,minimum width=2cm,right=of q]
            (discriminator) {$D(\bm{x}, \bm{z})$};
        \node[box,right= of discriminator] (p) {
            \begin{tikzpicture}
                \node (x_tilde) {$\tilde{\bm{x}} \sim p(\bm{x} \mid \bm{z})$};
                \node[above=of x_tilde] (z) {$\bm{z} \sim p(\bm{z})$};
            \end{tikzpicture}
        };
        \draw[->] (x) -- (z_hat) node[midway,above,rotate=90]
			{$\scriptstyle G_z(\bm{x})$};
        \draw[->] (z) -- (x_tilde) node[midway,above,rotate=270]
			{$\scriptstyle G_x(\bm{z})$};
        \draw[->] (q) -- (discriminator) node[midway,above]
			{$(\bm{x}, \hat{\bm{z}})$};
        \draw[->] (p) -- (discriminator) node[midway,above]
			{$(\tilde{\bm{x}}, \bm{z})$};
    \end{tikzpicture}
    \caption{\label{fig:model} The adversarially learned inference (ALI) game.}
\end{figure}
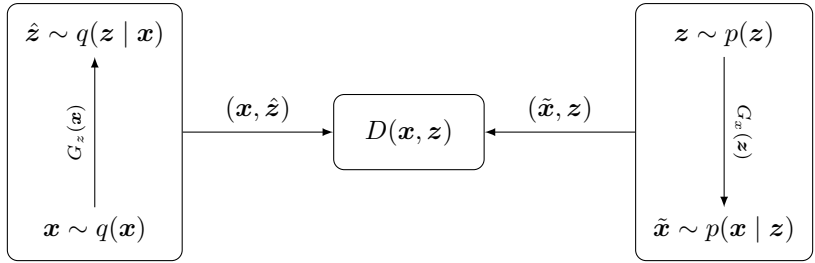

Consider the two following probability distributions over $\bm{x}$ and $\bm{z}$:
\begin{itemize}
    \item the \emph{encoder} joint distribution
		$q(\bm{x}, \bm{z}) = q(\bm{x}) q(\bm{z} \mid \bm{x})$,
    \item the \emph{decoder} joint distribution
		$p(\bm{x}, \bm{z}) = p(\bm{z}) p(\bm{x} \mid \bm{z})$.
\end{itemize}
These two distributions have marginals that are known to us: the encoder
marginal $q(\bm{x})$ is the empirical data distribution and the decoder
marginal $p(\bm{z})$ is usually defined to be a simple, factorized
distribution, such as the standard Normal distribution $p(\bm{z}) =
\mathcal{N}(\bm{0},\bm{I})$. As such, the generative process between $q(\bm{x},
\bm{z})$ and $p(\bm{x}, \bm{z})$ is reversed.

ALI's objective is to match the two joint distributions. If this is achieved,
then we are ensured that all marginals match and all conditional distributions
also match. In particular, we are assured that the conditional $q(\bm{z} \mid
\bm{x})$ matches the posterior $p(\bm{z} \mid \bm{x})$.

In order to match the joint distributions, an adversarial game is played. Joint
pairs $(\bm{x}, \bm{z})$ are drawn either from $q(\bm{x}, \bm{z})$ or
$p(\bm{x}, \bm{z})$, and a discriminator network learns to discriminate between
the two, while the encoder and decoder networks are trained to fool the
discriminator.

The value function describing the game is given by:
\begin{equation}
\label{eq:value_function}
\begin{split}
    \min_G \max_D V(D, G)
	&= \mathbb{E}_{q(\bm{x})}[\log(D(\bm{x}, G_z(\bm{x})))]
	 + \mathbb{E}_{p(\bm{z})}[\log(1 - D(G_x(\bm{z}), \bm{z}))] \\
    &= \iint q(\bm{x}) q(\bm{z} \mid \bm{x})
		     \log(D(\bm{x}, \bm{z})) d\bm{x} d\bm{z} \\
    &+ \iint p(\bm{z}) p(\bm{x} \mid \bm{z})
			 \log(1 - D(\bm{x}, \bm{z})) d\bm{x} d\bm{z}.
\end{split}
\end{equation}

An attractive property of adversarial approaches is that they do not require
that the conditional densities can be computed; they only require that they can
be sampled from in a way that allows gradient backpropagation. In the case of
ALI, this means that gradients should propagate from the discriminator network
to the encoder and decoder networks.

This can be done using the the reparametrization trick
\citep{kingma2013fast,bengio2013deep,bengio2013estimating}. Instead of sampling
directly from the desired distribution, the random variable is computed as a
deterministic transformation of some noise such that its distribution is the
desired distribution. For instance, if $q(z \mid x) = \mathcal{N}(\mu(x),
\sigma^2(x)I)$, one can draw samples by computing
\begin{equation}
    z = \mu(x) + \sigma(x) \odot \epsilon, \quad
    \epsilon \sim \mathcal{N}(0, I).
\end{equation}

More generally, one can employ a change of variable of the form
\begin{equation}
    v = f(u, \epsilon)
\end{equation}
where $\epsilon$ is some random source of noise.

The discriminator is trained to distinguish between samples from the encoder
$(\bm{x},\hat{\bm{z}}) \sim q(\bm{x}, \bm{z})$ and samples from the decoder
$(\tilde{\bm{x}}, \bm{z}) \sim p(\bm{x},\bm{z})$. The generator is trained to
fool the discriminator, i.e., to generate $\bm{x}, \bm{z}$ pairs from
$q(\bm{x},\bm{z})$ or $p(\bm{x}, \bm{z})$ that are indistinguishable one from
another. See \autoref{fig:model} for a diagram of the adversarial game and
\autoref{alg:ali} for an algorithmic description of the procedure.

In such a setting, and under the assumption of an optimal discriminator, the
generator minimizes the Jensen-Shannon divergence \citep{lin1991divergence}
between $q(\bm{x}, \bm{z})$ and $p(\bm{x}, \bm{z})$. This can be shown using
the same proof sketch as in the original GAN
paper~\citep{goodfellow2014generative}.

\begin{algorithm}[t]
\begin{algorithmic}
    \State $\theta_{g}, \theta_{d} \gets \text{initialize network parameters}$
    \Repeat
		\State $\bm{x}^{(1)}, \ldots, \bm{x}^{(M)} \sim q(\bm{x})$
            \Comment{Draw $M$ samples from the dataset and the prior}
		\State $\bm{z}^{(1)}, \ldots, \bm{z}^{(M)} \sim p(\bm{z})$
		\State $\hat{\bm{z}}^{(i)} \sim q(\bm{z} \mid \bm{x} = \bm{x}^{(i)}),
			    \quad i = 1, \ldots, M$
            \Comment{Sample from the conditionals}
		\State $\tilde{\bm{x}}^{(j)} \sim p(\bm{x} \mid \bm{z} = \bm{z}^{(j)}),
				\quad j = 1, \ldots, M$
        \State $\rho_q^{(i)} \gets D(\bm{x}^{(i)}, \hat{\bm{z}}^{(i)}),
				\quad i = 1, \ldots, M$
            \Comment{Compute discriminator predictions}
        \State $\rho_p^{(j)} \gets D(\tilde{\bm{x}}^{(j)}, \bm{z}^{(j)}),
				\quad j = 1, \ldots, M$
        \State $\mathcal{L}_d \gets
            -\frac{1}{M} \sum_{i=1}^M \log(\rho_q^{(i)})
            -\frac{1}{M} \sum_{j=1}^M\ log(1 - \rho_p^{(j)})$
            \Comment{Compute discriminator loss}
        \State $\mathcal{L}_g \gets
            -\frac{1}{M} \sum_{i=1}^M \log(1 - \rho_q^{(i)})
            -\frac{1}{M} \sum_{j=1}^M \log(\rho_p^{(j)})$
            \Comment{Compute generator loss}
        \State $\theta_d \gets \theta_d - \nabla_{\theta_d} \mathcal{L}_d$
            \Comment{Gradient update on discriminator network}
        \State $\theta_g \gets \theta_g - \nabla_{\theta_g} \mathcal{L}_g$
            \Comment{Gradient update on generator networks}
    \Until{convergence}
\end{algorithmic}
\caption{\label{alg:ali} The ALI training procedure.}
\end{algorithm}

\subsection{Relation to GAN}
ALI bears close resemblance to GAN, but it differs from it in the two following
ways:

\begin{itemize}
	\item The generator has two components: the encoder, $G_z(\bm{x})$, which
		maps data samples $\bm{x}$ to $\bm{z}$-space, and the decoder
		$G_x(\bm{z})$, which maps samples from the prior $p(\bm{z})$ (a source
		of noise) to the input space.
	\item The discriminator is trained to distinguish between joint pairs
		$(\bm{x}, \hat{\bm{z}} = G_x(\bm{x}))$ and $(\tilde{\bm{x}} =
		G_x(\bm{z}), \bm{z})$, as opposed to marginal samples $\bm{x} \sim
		q(\bm{x})$ and $\tilde{\bm{x}} \sim p(\bm{x})$.
\end{itemize}

\subsection{Alternative approaches to feedforward inference in GANs}
\label{sec:alternative}

The ALI training procedure is not the only way one could learn a feedforward
inference network in a GAN setting.

In recent work, \citet{chen2016infogan} introduce a model called InfoGAN which
minimizes the mutual information between a subset $\bm{c}$ of the latent code
and $\bm{x}$ through the use of an auxiliary distribution $Q(\bm{c} \mid
\bm{x})$. However, this does not correspond to full inference on $\bm{z}$, as
only the value for $\bm{c}$ is inferred. Additionally, InfoGAN requires that
$Q(\bf{c} \mid \bf{x})$ is a tractable approximate posterior that can be sampled
from and evaluated. ALI only requires that inference networks can be sampled
from, allowing it to represent arbitrarily complex posterior distributions.

One could learn the inverse mapping from GAN samples: this corresponds to
learning an encoder to reconstruct $\bm{z}$, i.e. finding an encoder such that
$\mathbb{E}_{z \sim p(z)}[\|z - G_z(G_x(z))\|_2^2] \approx 0$. We are not aware
of any work that reports results for this approach. This resembles the InfoGAN
learning procedure but with a fixed generative model and a factorial Gaussian
posterior with a fixed diagonal variance.

Alternatively, one could decompose training into two phases. In the first phase,
a GAN is trained normally. In the second phase, the GAN's decoder is frozen and
an encoder is trained following the ALI procedure (i.e., a discriminator taking
{\em both} $\bm{x}$ and $\bm{z}$ as input is introduced). We call this {\em
post-hoc learned inference}. In this setting, the encoder and the decoder cannot
interact together during training and the encoder must work with whatever the
decoder has learned during GAN training. Post-hoc learned inference may be
suboptimal if this interaction is beneficial to modeling the data distribution.

\subsection{Generator value function}

As with GANs, when ALI's discriminator gets too far ahead, its generator may
have a hard time minimizing the value function in \autoref{eq:value_function}.
If the discriminator's output is sigmoidal, then the gradient of the value
function with respect to the discriminator's output vanishes to zero as the
output saturates.

As a workaround, the generator is trained to maximize
\begin{equation}
\label{eq:generator_value_function}
    V'(D, G) = \mathbb{E}_{q(\bm{x})}[\log(1 - D(\bm{x}, G_{\bm{z}}(\bm{x})))] +
               \mathbb{E}_{p(\bm{z})}[\log(D(G_{\bm{x}}(\bm{z}), \bm{z}))] \\
\end{equation}
which has the same fixed points but whose gradient is stronger when the
discriminator's output saturates.

The adversarial game does not require an analytical expression for the joint
distributions. This means we can introduce variable changes without having to
know the explicit distribution over the new variable.  For instance, sampling
from $p(z)$ could be done by sampling $\epsilon \sim \mathcal{N}(0, I)$ and
passing it through an arbitrary differentiable function $z = f(\epsilon)$.

However, gradient propagation into the encoder and decoder networks relies on
the reparametrization trick, which means that ALI is not directly
applicable to either applications with discrete data or to models with
discrete latent variables.

\begin{figure}[p]
    \centering
    \begin{subfigure}[t]{0.49\textwidth}
        \centering
        \includegraphics[height=\figureHeight]{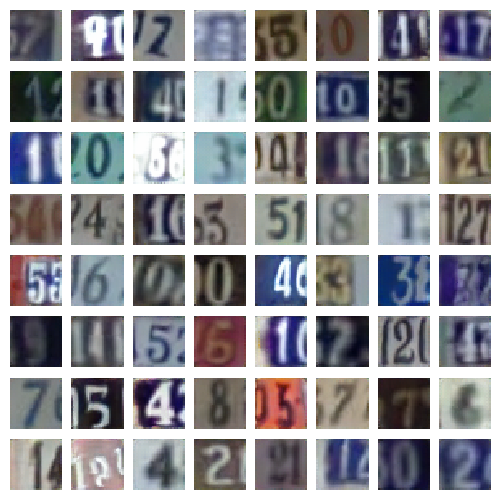}
        \caption{\label{fig:svhn_samples} SVHN samples.}
    \end{subfigure}
    \hfill
    \begin{subfigure}[t]{0.49\textwidth}
        \centering
        \includegraphics[height=\figureHeight]{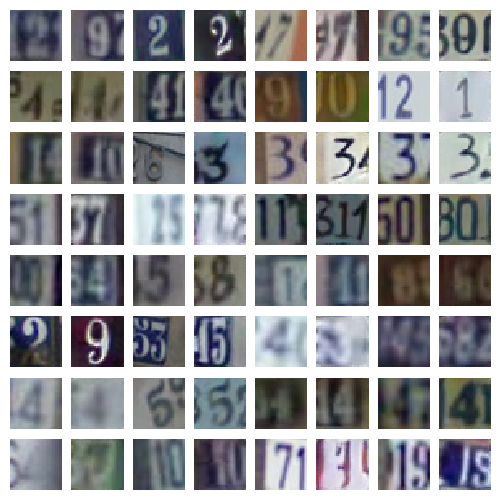}
        \caption{\label{fig:svhn_reconstructions} SVHN reconstructions.}
    \end{subfigure}
    \caption{\label{fig:svhn_images} Samples and reconstructions on the SVHN
        dataset. For the reconstructions, odd columns are
        original samples from the validation set and even columns are
        corresponding reconstructions (e.g., second column contains
        reconstructions of the first column's validation set samples).}
\end{figure}

\begin{figure}[p]
    \centering
    \begin{subfigure}[t]{0.49\textwidth}
        \centering
        \includegraphics[height=\figureHeight]{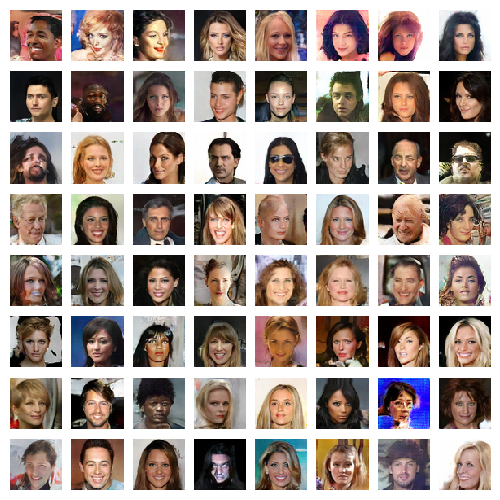}
        \caption{\label{fig:celeba_samples} CelebA samples.}
    \end{subfigure}
    \hfill
    \begin{subfigure}[t]{0.49\textwidth}
        \centering
        \includegraphics[height=\figureHeight]{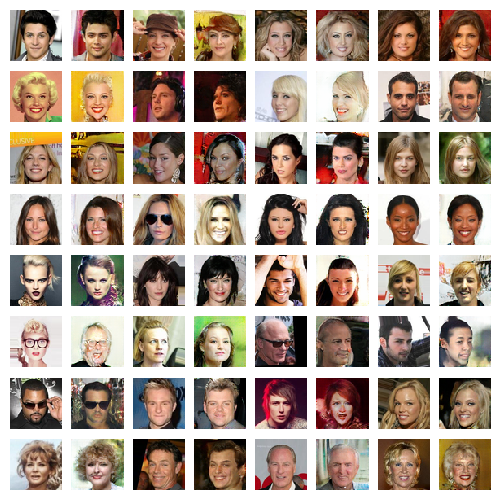}
        \caption{\label{fig:celeba_reconstructions} CelebA reconstructions.}
    \end{subfigure}
    \caption{\label{fig:celeba_images} Samples and reconstructions on the CelebA
        dataset. For the reconstructions, odd columns are
        original samples from the validation set and even columns are
        corresponding reconstructions.}
\end{figure}

\begin{figure}[p]
    \centering
    \begin{subfigure}[t]{0.49\textwidth}
        \centering
        \includegraphics[height=\figureHeight]{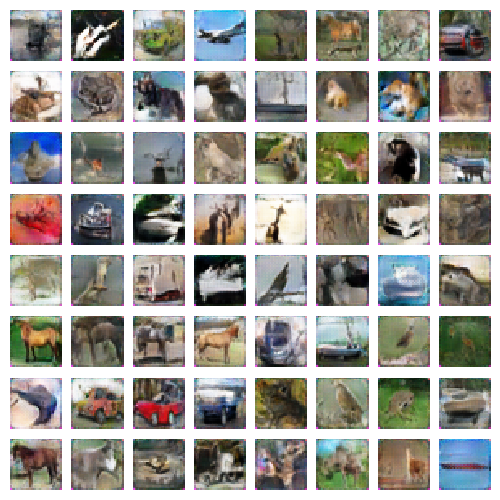}
        \caption{\label{fig:cifar10_samples} CIFAR10 samples.}
    \end{subfigure}
    \hfill
    \begin{subfigure}[t]{0.49\textwidth}
        \centering
        \includegraphics[height=\figureHeight]{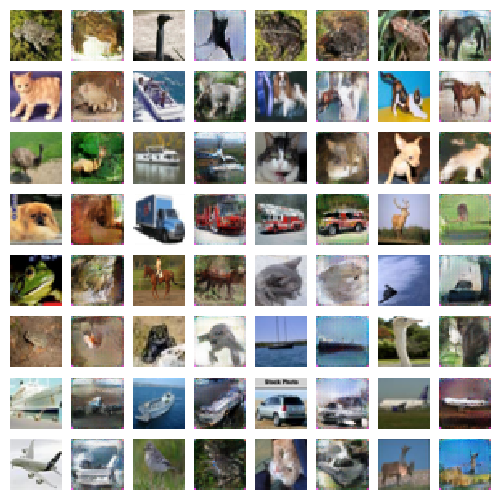}
        \caption{\label{fig:cifar10_reconstructions} CIFAR10
          reconstructions.}
    \end{subfigure}
    \caption{\label{fig:cifar10_images} Samples and reconstructions on the
        CIFAR10 dataset. For the reconstructions, odd columns are
        original samples from the validation set and even columns are
        corresponding reconstructions.}
\end{figure}

\subsection{Discriminator optimality}
\label{sec:disc_opt}
\begin{prop}
    Given a fixed generator $G$, the optimal discriminator is given by
    \begin{equation}
    \label{eq:optgansol}
		D^*(x, z) = \frac{q(x, z)}{q(x, z) + p(x, z)}.
    \end{equation}
\end{prop}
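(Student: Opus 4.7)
The plan is to follow the same pointwise-optimization argument that Goodfellow et al.\ use for the original GAN discriminator, adapted to the joint space $(\bm{x},\bm{z})$ instead of just $\bm{x}$. The value function in \autoref{eq:value_function} is already written as an integral over $(\bm{x},\bm{z})$ using the factorizations $q(\bm{x},\bm{z}) = q(\bm{x})q(\bm{z}\mid\bm{x})$ and $p(\bm{x},\bm{z}) = p(\bm{z})p(\bm{x}\mid\bm{z})$, so I would first combine the two integrals into a single one:
\begin{equation*}
V(D,G) = \iint \bigl[\, q(\bm{x},\bm{z}) \log D(\bm{x},\bm{z}) + p(\bm{x},\bm{z}) \log(1 - D(\bm{x},\bm{z}))\,\bigr]\, d\bm{x}\, d\bm{z}.
\end{equation*}

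Since $G$ is fixed, $q(\bm{x},\bm{z})$ and $p(\bm{x},\bm{z})$ are fixed nonnegative measures on the joint space, and the integrand at each point $(\bm{x},\bm{z})$ depends on $D$ only through the scalar value $D(\bm{x},\bm{z}) \in (0,1)$. The next step is therefore a pointwise maximization: for fixed nonnegative constants $a = q(\bm{x},\bm{z})$ and $b = p(\bm{x},\bm{z})$ with $a+b > 0$, maximize $y \mapsto a \log y + b \log(1-y)$ over $y \in (0,1)$. Setting the derivative $a/y - b/(1-y)$ to zero yields $y^* = a/(a+b)$, and the second derivative $-a/y^2 - b/(1-y)^2$ is strictly negative, so this is indeed the unique maximizer. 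Substituting $a$ and $b$ back gives $D^*(\bm{x},\bm{z}) = q(\bm{x},\bm{z})/(q(\bm{x},\bm{z}) + p(\bm{x},\bm{z}))$.

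The only subtlety, which I would handle briefly, is what happens on the set where $q(\bm{x},\bm{z}) + p(\bm{x},\bm{z}) = 0$: there the integrand is zero regardless of $D$, so the value of $D^*$ is irrelevant and can be defined arbitrarily (the expression on the right-hand side of \eqref{eq:optgansol} is understood modulo this null set). I would also note that, as in the GAN proof, the argument implicitly assumes $D$ ranges over all measurable $(0,1)$-valued functions on the joint space, which is the standard nonparametric assumption behind such optimality statements. The calculation itself is routine; the only conceptual step worth flagging is that moving to the joint space does not change the structure of the optimization at all, because the discriminator sees $(\bm{x},\bm{z})$ as a single input and the two joint densities play exactly the roles that $p_{\text{data}}$ and $p_g$ play in the original GAN proof.
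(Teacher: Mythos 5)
Your proof is correct and follows essentially the same route as the paper: the paper's appeal to ``the concavity of the log and the simplified Euler--Lagrange first order conditions on $(x,z) \rightarrow D(x,z)$'' is exactly the pointwise maximization of $a \log y + b \log(1-y)$ that you carry out explicitly, since the integrand contains no derivatives of $D$. Your version simply spells out what the paper leaves implicit (the strict concavity check and the null set where $q+p=0$), which is fine.
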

\begin{proof}
	For a fixed generator $G$, the complete data value function is
	\begin{equation}
	\label{eq:cdganvalue}
		V(D, G) = \EE_{x,z \sim q(x,z)}[\log(D(x, z))]
		        + \EE_{x, z \sim p(x, z)}[\log(1 - D(x, z))].
	\end{equation}
	The result follows by the concavity of the log and the simplified
	Euler-Lagrange equation first order conditions on
	$(x, z) \rightarrow D(x, z)$.
\end{proof}

\subsection{Relationship with the Jensen-Shannon divergence}
\label{sec:jsd_rel}
\begin{prop}
	Under an optimal discriminator $D^{*}$, the generator minimizes the
	Jensen-Shanon divergence which attains its minimum if and only if
	$q(\bm{x}, \bm{z}) = p(\bm{x}, \bm{z})$.
\end{prop}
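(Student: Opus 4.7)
The plan is to adapt the standard GAN argument from Goodfellow et al. to the joint setting on $(x,z)$. My first step is to substitute the optimal discriminator $D^*(x,z) = q(x,z)/(q(x,z)+p(x,z))$ established in the preceding proposition into the complete-data value function of \autoref{eq:cdganvalue}. This gives a functional $C(G) \equiv V(D^*, G)$ that depends only on $G$:
\[
C(G) = \EE_{q(x,z)}\!\left[\log\frac{q(x,z)}{q(x,z)+p(x,z)}\right] + \EE_{p(x,z)}\!\left[\log\frac{p(x,z)}{q(x,z)+p(x,z)}\right].
\]

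Next I would introduce the mixture $m(x,z) = \tfrac{1}{2}(q(x,z)+p(x,z))$ by multiplying the numerator and denominator of each fraction by $2$. This produces an additive constant $-\log 4$ and turns the two integrals into $\mathrm{KL}(q(x,z)\,\|\,m(x,z))$ and $\mathrm{KL}(p(x,z)\,\|\,m(x,z))$ respectively. By the definition of the Jensen-Shannon divergence, the result is
\[
C(G) = -\log 4 + 2\,\mathrm{JSD}(q(x,z)\,\|\,p(x,z)).
\]

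To conclude, I would invoke two standard properties of the Jensen-Shannon divergence: it is nonnegative (each KL term is, by Gibbs' inequality), and it vanishes if and only if its two arguments are equal almost everywhere. Therefore the global minimum of $C(G)$ is $-\log 4$, attained precisely when $q(x,z) = p(x,z)$, as claimed.

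None of the steps is conceptually delicate: the argument transports verbatim from the marginal GAN analysis to the joint analysis, the only change being that the densities are over $(x,z)$ rather than over $x$. The main thing to be careful about is a mild technicality around absolute continuity between $q$ and $p$; as in the original GAN paper, I would simply assume the two densities share a common support so that the KL terms are well defined, and note that this is the standard regime in which the equivalence with the Jensen-Shannon divergence is stated.
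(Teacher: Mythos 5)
Your argument is correct and is exactly what the paper intends: the paper's proof simply states that the result is a straightforward extension of the argument in \citet{goodfellow2014generative} to the joint densities over $(\bm{x},\bm{z})$, and your derivation (plugging in $D^*$, introducing the mixture $m=\tfrac{1}{2}(q+p)$ to obtain $-\log 4 + 2\,\mathrm{JSD}(q\,\Vert\,p)$, and invoking nonnegativity and the identity-of-arguments condition for JSD) is precisely that extension spelled out. No discrepancies to report.
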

\begin{proof}
	The proof is a straightforward extension of the proof
	in~\cite{goodfellow2014generative}.
\end{proof}

\subsection{Invertibility}
\label{sec:invertibility}
\begin{prop}
    Assuming optimal discriminator $D$ and generator $G$. If the encoder
    $G_{\bm{x}}$ is deterministic, then $G_{\bm{x}} = G^{-1}_{\bm{z}}$ and
    $G_{\bm{z}} = G^{-1}_{\bm{x}}$ almost everywhere.
\end{prop}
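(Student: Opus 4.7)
The plan is to chain together the two preceding propositions and then read off the inverse identities from the support of the matched joint distribution. By the optimal-discriminator result together with the Jensen--Shannon characterization of the generator's optimum, optimality of both $D$ and $G$ forces $q(\bm{x}, \bm{z}) = p(\bm{x}, \bm{z})$ almost everywhere. Everything that follows is an exercise in combining this equality with the fact that a deterministic conditional is a Dirac measure concentrated on the graph of the deterministic map.

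Concretely, I would first translate the determinism hypothesis on the encoder $G_{\bm{z}}$ into the statement that $q(\bm{z} \mid \bm{x}) = \delta(\bm{z} - G_{\bm{z}}(\bm{x}))$, so that $q(\bm{x}, \bm{z})$ is supported on the graph $\Gamma_{\bm{z}} = \{(\bm{x}, G_{\bm{z}}(\bm{x}))\}$. The equality $p = q$ then forces $p(\bm{x}, \bm{z})$ to be supported on the same set. Disintegrating as $p(\bm{x}, \bm{z}) = p(\bm{z}) p(\bm{x} \mid \bm{z})$ and using that the prior $p(\bm{z})$ has full support (e.g. Gaussian), I would argue that for $p(\bm{z})$-almost every $\bm{z}$ the conditional $p(\bm{x} \mid \bm{z})$ is a point mass, say $\delta(\bm{x} - G_{\bm{x}}(\bm{z}))$; equivalently, the decoder must itself be deterministic almost everywhere, and $p(\bm{x}, \bm{z})$ is supported on $\Gamma_{\bm{x}} = \{(G_{\bm{x}}(\bm{z}), \bm{z})\}$.

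The conclusion then follows by comparing the two graphs. From $\Gamma_{\bm{x}} \subseteq \Gamma_{\bm{z}}$ (modulo null sets) I would read off that $G_{\bm{z}}(G_{\bm{x}}(\bm{z})) = \bm{z}$ for $p(\bm{z})$-a.e. $\bm{z}$, and from $\Gamma_{\bm{z}} \subseteq \Gamma_{\bm{x}}$ that $G_{\bm{x}}(G_{\bm{z}}(\bm{x})) = \bm{x}$ for $q(\bm{x})$-a.e. $\bm{x}$. Taken together, these identities say exactly that $G_{\bm{x}} = G_{\bm{z}}^{-1}$ and $G_{\bm{z}} = G_{\bm{x}}^{-1}$ almost everywhere with respect to the relevant marginals.

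The main obstacle is the measure-theoretic step in the middle: when the encoder is deterministic, $q(\bm{x}, \bm{z})$ is a singular measure concentrated on a graph of lower dimension, so the disintegration $p(\bm{x} \mid \bm{z})$ must be handled with some care to conclude that it collapses to a Dirac for $p(\bm{z})$-a.e. $\bm{z}$ rather than being allowed to smear mass off the graph. Everything else is essentially bookkeeping once this determinism is extracted. I would also flag that the result should really be read as an almost-everywhere statement against the supports of $q(\bm{x})$ and $p(\bm{z})$, which is the natural sense in which ``invertibility'' makes sense without any further regularity assumptions on the two networks.
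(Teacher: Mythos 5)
Your opening and closing steps match the paper's route: optimality of $D$ and $G$ gives $q(\bm{x},\bm{z}) = p(\bm{x},\bm{z})$ via the two preceding propositions, and once \emph{both} conditionals are Dirac measures the comparison of the two graphs does yield $G_{\bm{x}} \circ G_{\bm{z}} = \mathrm{id}$ $q(\bm{x})$-a.e.\ and $G_{\bm{z}} \circ G_{\bm{x}} = \mathrm{id}$ $p(\bm{z})$-a.e. The genuine gap is the middle step, which you flag as the ``main obstacle'' but then assert can be carried out: matching $p = q$ together with determinism of the encoder alone does \emph{not} force $p(\bm{x} \mid \bm{z})$ to collapse to a point mass. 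Knowing that the joint is supported on the graph $\Gamma_{\bm{z}} = \{(\bm{x}, G_{\bm{z}}(\bm{x}))\}$ only constrains $p(\bm{x} \mid \bm{z})$ to live on the fiber $\{\bm{x} : G_{\bm{z}}(\bm{x}) = \bm{z}\}$, which is a single point only if $G_{\bm{z}}$ is essentially injective---and injectivity is part of what the proposition is supposed to deliver, so you cannot extract it from the support argument. Concretely: take $q(x)$ standard normal on $\mathbb{R}$, $G_{\bm{z}}(x) = T(|x|)$ with $T$ pushing the half-normal onto the $\mathcal{N}(0,1)$ prior, and the stochastic decoder $p(x \mid z) = \tfrac{1}{2}\delta_{T^{-1}(z)} + \tfrac{1}{2}\delta_{-T^{-1}(z)}$. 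Then $q(\bm{x},\bm{z}) = p(\bm{x},\bm{z})$ exactly and the encoder is deterministic, yet the decoder is stochastic and no inverse relation can hold, since $G_{\bm{z}}$ is two-to-one.

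The resolution is that determinism of the decoder is an additional hypothesis, not a consequence: in ALI's setting $\bm{x}$ is a deterministic function of $\bm{z}$ by construction, and the paper's sketch uses exactly this fact, $p(\bm{x} \mid \bm{z}) = \delta_{\bm{x} - G_{\bm{x}}(\bm{z})}$, alongside the stated assumption that the encoder is deterministic (the statement's phrase ``the encoder $G_{\bm{x}}$'' is a notational slip). With both conditionals Dirac, the paper considers the event $R_{\epsilon} = \{\bm{x} : \Vert \bm{x} - (G_{\bm{x}} \circ G_{\bm{z}})(\bm{x}) \Vert > \epsilon\}$, notes that its probability is equal under $p$ and $q$ by generator optimality and is zero under $p$, and concludes the a.e.\ inverse identities; the complete argument is in \citep{donahue2016adversarial}, where both maps are assumed deterministic. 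If you add the decoder's determinism to your hypotheses, your graph-comparison argument becomes essentially the paper's and goes through; as written, the step deriving a Dirac $p(\bm{x}\mid\bm{z})$ from full support of the prior is false.
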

\begin{sketchproof}
    Consider the event $R_{\epsilon} = \{\bm{x} : \Vert x - (G_{\bm{x}} \circ
    G_{\bm{z}})(\bm{x})) \Vert > \epsilon\}$ for some positive $\epsilon$. This
    set can be seen as a section of the $(\bm{x}, \bm{z})$ space 	over the
    elements $\bm{z}$ such that $\bm{z} = G_{\bm{z}}(x)$. The generator being
    optimal, the probabilities of $R_{\epsilon}$ under $p(\bm{x}, \bm{z})$ and
    $q(\bm{x}, \bm{z})$ are equal.  Now $p(\bm{x} \mid \bm{z}) = \delta_{x -
    G_{x}(z)}$, where $\delta$ is the Dirac delta distribution.  This is enough
    to show that there are no $x$ satisfying the event $R_{\epsilon}$ and thus
    $G_{\bm{x}} = G^{-1}_{z}$ almost everywhere.  By symmetry, the same argument
    can be applied to show that $G_{\bm{z}} = G^{-1}_{\bm{x}}$. \\ The complete
    proof is given in \citep{donahue2016adversarial}, in which the authors
    independently examine the same model structure under the name Bidirectional
    GAN (BiGAN).
\end{sketchproof}

\section{Related Work}

Other recent papers explore hybrid approaches to generative modeling. One such
approach is to relax the probabilistic interpretation of the VAE model by
replacing either the KL-divergence term or the reconstruction term with
variants that have better properties. The adversarial autoencoder model
\citep{makhzani2015adversarial} replaces the KL-divergence term with a
discriminator that is trained to distinguish between approximate posterior and
prior samples, which provides a more flexible approach to matching the marginal
$q(\bm{z})$ and the prior. Other papers explore replacing the reconstruction
term with either GANs or auxiliary networks. \citet{larsen2015autoencoding}
collapse the decoder of a VAE and the generator of a GAN into one network in
order to supplement the reconstruction loss with a learned similarity metric.
\citet{lamb2016discriminative} use the hidden layers of a pre-trained
classifier as auxiliary reconstruction losses to help the VAE focus on
higher-level details when reconstructing. \citet{dosovitskiy2016generating}
combine both ideas into a unified loss function.

ALI's approach is also reminiscent of the adversarial autoencoder model, which
employs a GAN to distinguish between samples from the approximate posterior
distribution $q(\bm{z} \mid \bm{x})$ and prior samples. However, unlike
adversarial autoencoders, no explicit reconstruction loss is being optimized in
ALI, and the discriminator receives joint pairs of samples $(\bm{x}, \bm{z})$
rather than marginal $\bm{z}$ samples.

Independent work by \citet{donahue2016adversarial} proposes the same model under
the name Bidirectional GAN (BiGAN), in which the authors emphasize the learned
features' usefulness for auxiliary supervised and semi-supervised tasks. The
main difference in terms of experimental setting is that they use a
deterministic $q(\bm{z} \mid \bm{x})$ network, whereas we use a stochastic
network. In our experience, this does not make a big difference when $\bm{x}$ is
a deterministic function of $\bm{z}$ as the stochastic inference networks tend
to become determinstic as training progresses. When using stochastic mappings
from $\bm{z}$ to $\bm{x}$, the additional flexiblity of stochastic posteriors is
critical.

\section{Experimental results}

\begin{figure}[t]
    \centering
    \begin{subfigure}[t]{0.49\textwidth}
        \centering
        \includegraphics[height=\figureHeight]{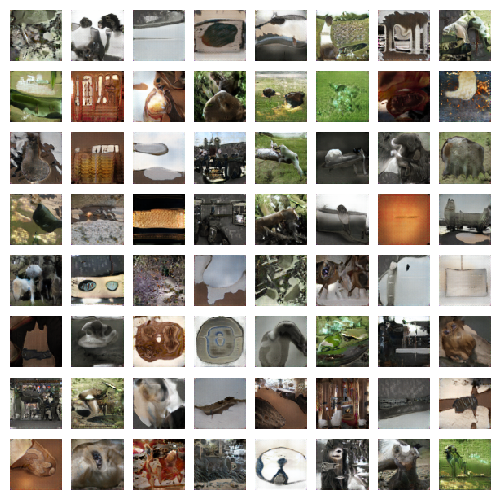}
        \caption{\label{fig:tiny_imagenet_samples} Tiny ImageNet samples.}
    \end{subfigure}
    \hfill
    \begin{subfigure}[t]{0.49\textwidth}
        \centering
        \includegraphics[height=\figureHeight]{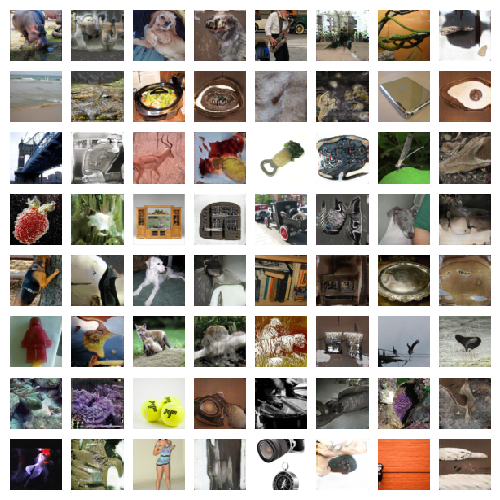}
        \caption{\label{fig:tiny_imagenet_reconstructions} Tiny ImageNet
            reconstructions.}
    \end{subfigure}
    \caption{\label{fig:tiny_imagenet_images} Samples and reconstructions on the
        Tiny ImageNet dataset. For the reconstructions, odd columns are original
        samples from the validation set and even columns are corresponding
        reconstructions.}
\end{figure}

We applied ALI to four different datasets, namely CIFAR10
\citep{krizhevsky2009learning}, SVHN \citep{netzer2011reading}, CelebA
\citep{liu2015deep} and a center-cropped, $64 \times 64$ version of the ImageNet
dataset \citep{russakovsky2015imagenet}.\footnote{
    The code for all experiments can be found at
    \url{https://github.com/IshmaelBelghazi/ALI}. Readers can also consult the
    accompanying website at \url{https://ishmaelbelghazi.github.io/ALI}.}

Transposed convolutions are used in $G_x(\bm{z})$. This operation corresponds
to the transpose of the matrix representation of a convolution, i.e., the
gradient of the convolution with respect to its inputs. For more details about
transposed convolutions and related operations,
see~\citet{dumoulin2016guide,shi2016deconvolution,odena2016deconvolution}.

\subsection{Samples and Reconstructions}
For each dataset, samples are presented (Figures \ref{fig:svhn_samples},
\ref{fig:celeba_samples} \ref{fig:cifar10_samples} and
\ref{fig:tiny_imagenet_samples}). They exhibit the same image fidelity as
samples from other adversarially-trained models.

We also qualitatively evaluate the fit between the conditional distribution
$q(\bm{z} \mid \bm{x})$ and the posterior distribution $p(\bm{z} \mid \bm{x})$
by sampling $\hat{\bm{z}} \sim q(\bm{z} \mid \bm{x})$ and $\hat{\bm{x}} \sim
p(\bm{x} \mid \bm{z} = \hat{\bm{z}})$ (Figures \ref{fig:svhn_reconstructions},
\ref{fig:celeba_reconstructions}, \ref{fig:cifar10_reconstructions} and
\ref{fig:tiny_imagenet_reconstructions}). This corresponds to reconstructing the
input in a VAE setting. Note that the ALI training objective does {\em not}
involve an explicit reconstruction loss.

We observe that reconstructions are not always faithful reproductions of the
inputs.  They retain the same crispness and quality characteristic to
adversarially-trained models, but oftentimes make mistakes in capturing exact
object placement, color, style and (in extreme cases) object identity. The
extent to which reconstructions deviate from the inputs varies between datasets:
on CIFAR10, which arguably constitutes a more complex input distribution, the
model exhibits less faithful reconstructions. This leads us to believe that poor
reconstructions are a sign of underfitting.

This failure mode represents an interesting departure from the bluriness
characteristic to the typical VAE setup. We conjecture that in the underfitting
regime, the latent variable representation learned by ALI is potentially more
invariant to less interesting factors of variation in the input and do not
devote model capacity to capturing these factors.

\subsection{Latent space interpolations}
As a sanity check for overfitting, we look at latent space interpolations
between validation set examples (\autoref{fig:celeba_interpolations}). We sample
pairs of validation set examples $\mathbf{x}_1$ and $\mathbf{x}_2$ and project
them into $\mathbf{z}_1$ and $\mathbf{z}_2$ by sampling from the encoder. We
then linearly interpolate between $\mathbf{z}_1$ and $\mathbf{z}_2$ and pass the
intermediary points through the decoder to plot the input-space interpolations.

We observe smooth transitions between pairs of examples, and intermediary images
remain believable. This is an indicator that ALI is not concentrating its
probability mass exclusively around training examples, but rather has learned
latent features that generalize well.

\begin{figure}[t]
    \centering
    \includegraphics[width=\textwidth]{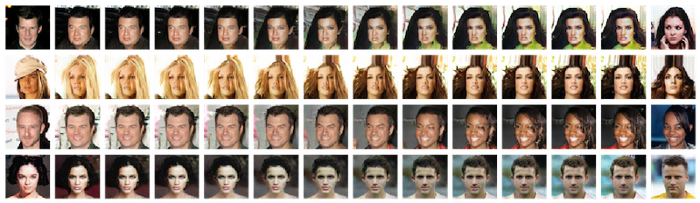}
    \caption{\label{fig:celeba_interpolations} Latent space interpolations on
        the CelebA validation set. Left and right columns correspond to the
        original pairs $\mathbf{x}_1$ and $\mathbf{x}_2$, and the columns in
        between correspond to the decoding of latent representations
        interpolated linearly from $\mathbf{z}_1$ to $\mathbf{z}_2$. Unlike other
        adversarial approaches like DCGAN~\citep{radford2015unsupervised}, ALI
        allows one to interpolate between actual data points.}
\end{figure}

\subsection{Semi-supervised learning}

We investigate the usefulness of the latent representation learned by ALI
through semi-supervised benchmarks on SVHN and CIFAR10.

We first compare with GAN on SVHN by following the procedure outlined in
\citet{radford2015unsupervised}. We train an L2-SVM on the learned
representations of a model trained on SVHN. The last three hidden layers of the
encoder as well as its output are concatenated to form a 8960-dimensional
feature vector. A 10,000 example held-out validation set is taken from the
training set and is used for model selection. The SVM is trained on 1000
examples taken at random from the remainder of the training set. The test error
rate is measured for 100 different SVMs trained on different random 1000-example
training sets, and the average error rate is measured along with its standard
deviation.

Using ALI's inference network as opposed to the discriminator to extract
features, we achieve a misclassification rate that is roughly $3.00 \pm 0.50\%$
lower than reported in \citet{radford2015unsupervised}
(\autoref{tab:semi-supervised-svhn}), which suggests that ALI's inference
mechanism is beneficial to the semi-supervised learning task.

We then investigate ALI's performance when label information is taken into
account during training. We adapt the discriminative model proposed in
\citet{Salimans2016gan}. The discriminator takes $x$ and $z$ as input and
outputs a distribution over $K + 1$ classes, where $K$ is the number of
categories. When label information is available for $q(x, z)$ samples, the
discriminator is expected to predict the label. When no label information is
available, the discriminator is expected to predict $K + 1$ for $p(x, z)$ samples
and $k \in \{1, \ldots, K\}$ for $q(x, z)$ samples.

Interestingly, \citet{Salimans2016gan} found that they required an alternative
training strategy for the generator where it tries to match first-order
statistics in the discriminator's intermediate activations with respect to the
data distribution (they refer to this as {\em feature matching}). We found that
ALI did not require feature matching to obtain comparable results. We achieve
results competitive with the state-of-the-art, as shown in Tables
\ref{tab:semi-supervised-svhn} and \ref{tab:semi-supervised-cifar10}.
\autoref{tab:semi-supervised-cifar10} shows that ALI offers a modest improvement
over \citet{Salimans2016gan}, more specifically for 1000 and 2000 labeled
examples.

\begin{table}[ht]
\caption{\label{tab:semi-supervised-svhn} SVHN test set missclassification rate}.
\centering
\scalebox{0.75}{
\begin{tabular}{@{}ll@{}} \toprule
Model & Misclassification rate \\ \midrule
VAE (M1 + M2) \citep{kingma2014semi}                    & $36.02$ \\
SWWAE with dropout \citep{zhao2015stacked}              & $23.56$ \\
DCGAN + L2-SVM \citep{radford2015unsupervised}          & $22.18$ \\
SDGM \citep{maaloe2016auxiliary}                        & $16.61$ \\ \midrule
\textbf{GAN (feature matching) \citep{Salimans2016gan}} & $\mathbf{8.11 \pm 1.3}$ \\
        ALI (ours, L2-SVM)                              & $19.14 \pm 0.50$ \\
\textbf{ALI (ours, no feature matching)}                & $\mathbf{7.42 \pm 0.65}$ \\
\bottomrule
\end{tabular}
}
\vspace{0.2cm}
\end{table}

\begin{table}[ht]
\caption{\label{tab:semi-supervised-cifar10} CIFAR10 test set missclassification
    rate for semi-supervised learning using different numbers of trained labeled
    examples. For ALI, error bars correspond to 3 times the standard deviation.}
\centering
\scalebox{0.75}{
\begin{tabular}{@{}lllll@{}} \toprule
Number of labeled examples & 1000 & 2000 & 4000 & 8000 \\
Model & \multicolumn{4}{c}{Misclassification rate} \\ \midrule
Ladder network \citep{ladder2015}              &              &      & $20.40$ &   \\
CatGAN \citep{catgan2015}                      &              &      & $19.58$ &   \\ \midrule
\textbf{GAN (feature matching) \citep{Salimans2016gan}} & $\mathbf{21.83 \pm 2.01}$ & $\mathbf{19.61 \pm 2.09}$ & $\mathbf{18.63 \pm 2.32}$ & $\mathbf{17.72 \pm 1.82}$ \\
\textbf{ALI (ours, no feature matching)} & $\mathbf{19.98 \pm 0.89}$ & $\mathbf{19.09 \pm 0.44}$ & $\mathbf{17.99 \pm 1.62}$ & $\mathbf{17.05 \pm 1.49}$ \\
\bottomrule
\end{tabular}
}
\vspace{0.2cm}
\end{table}

We are still investigating the differences between ALI and GAN with respect to
feature matching, but we conjecture that the latent representation learned by
ALI is better untangled with respect to the classification task and that it
generalizes better.

\subsection{Conditional Generation}
We extend ALI to match a conditional distribution. Let $\bm{y}$ represent a
fully observed conditioning variable. In this setting, the value function reads
\begin{equation}
\label{eq:conditional_value_function}
\min_G \max_D V(D, G) = \mathbb{E}_{q(\bm{x})\, p(\bm{y})}[\log(D(\bm{x}, G_z(\bm{x}, \bm{y}), \bm{y}))] + \mathbb{E}_{p(\bm{z})\, p(\bm{y})}[\log(1 - D(G_x(\bm{z}, \bm{y}), \bm{z}, \bm{y}))]
\end{equation}
We apply the conditional version of ALI to CelebA using the dataset's 40 binary
attributes. The attributes are linearly embedded in the encoder, decoder and
discriminator. We observe how a single element of the latent space $z$ changes
with respect to variations in the attributes vector $y$. Conditional samples are
shown in \autoref{fig:celeba_conditional_sequence}.

\begin{figure}[htb!]
    \centering
    \includegraphics[width=\textwidth]{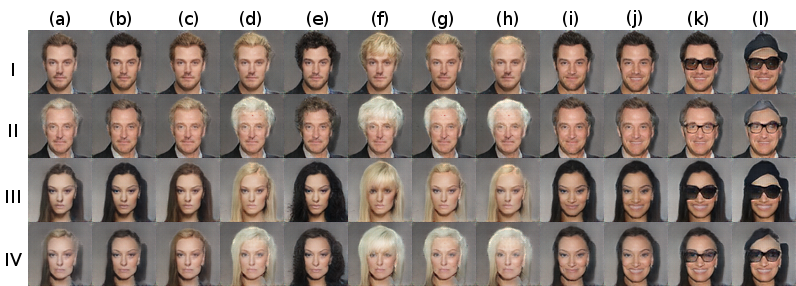}
    \caption{\label{fig:celeba_conditional_sequence} Conditional generation
    sequence. We sample a single fixed latent code $z$. Each row has a subset of
    attributes that are held constant across columns. The attributes are male,
    attractive, young for row I; male, attractive, older for row II; female,
    attractive, young for row III; female, attractive, older for Row IV.
    Attributes are then varied uniformly over rows across all columns in the
    following sequence: (b) black hair; (c) brown hair; (d) blond hair; (e)
    black hair, wavy hair; (f) blond hair, bangs; (g) blond hair, receding
    hairline; (h) blond hair, balding; (i) black hair, smiling; (j) black hair,
    smiling, mouth slightly open; (k) black hair, smiling, mouth slightly open,
    eyeglasses; (l) black hair, smiling, mouth slightly open, eyeglasses,
    wearing hat. }
\end{figure}

\subsection{Importance of learning inference jointly with generation}

\begin{figure}[htb!]
    \centering
    \includegraphics[width=\textwidth]{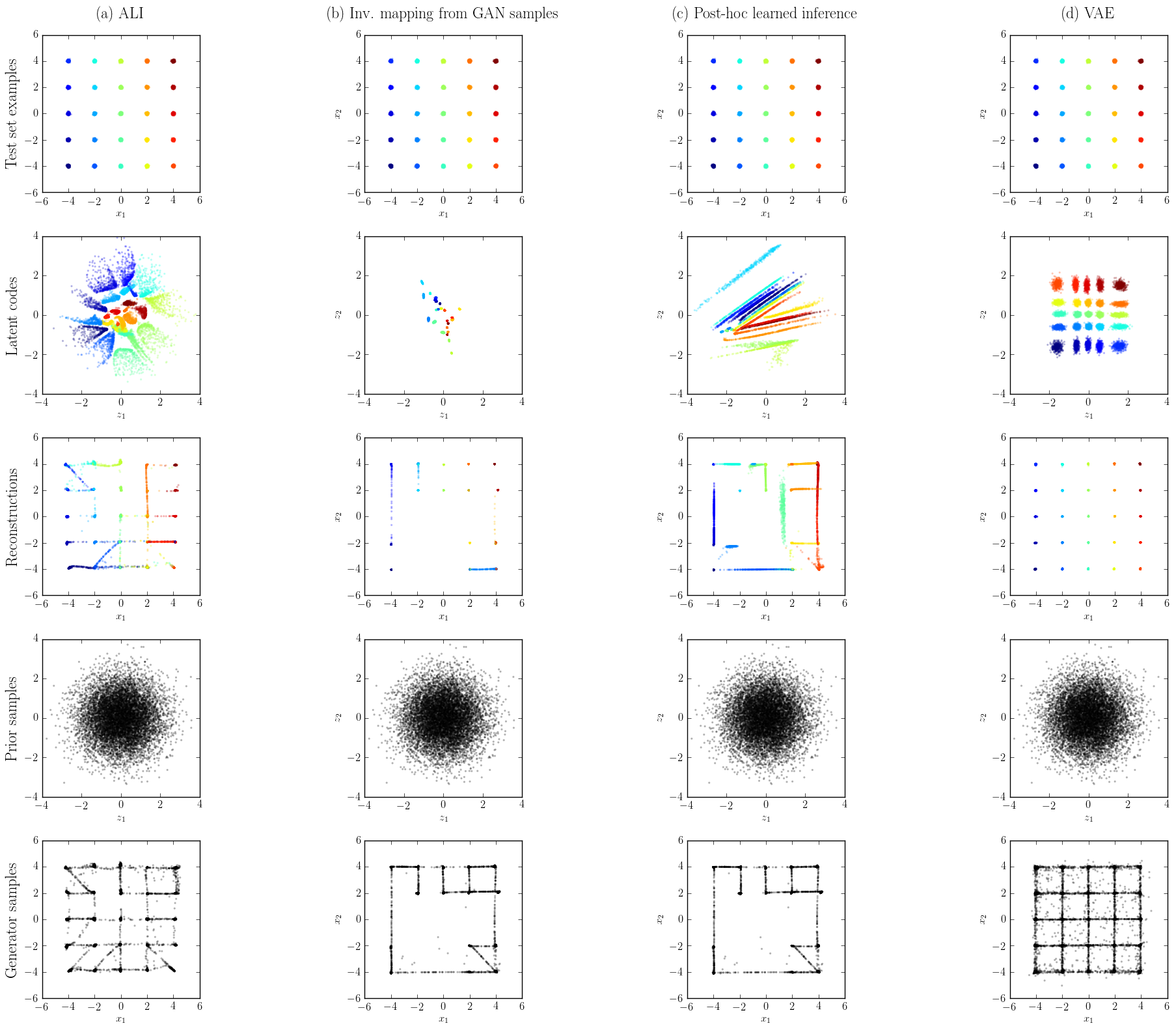}
    \caption{\label{fig:mixture_plot} Comparison of (a) ALI, (b) GAN with an
    encoder learned to reconstruct latent samples (c) GAN with an encoder
    learned through ALI, (d) variational autoencoder (VAE) on a 2D toy dataset.
    The ALI model in (a) does a much better job of covering the latent space
    (second row) and producing good samples than the two GAN models (b, c)
    augmented with an inference mechanism.}
\end{figure}

To highlight the role of the inference network during learning, we performed an
experiment on a toy dataset for which $q(\bm{x})$ is a 2D gaussian mixture with
25 mixture components laid out on a grid. The covariance matrices and centroids
have been chosen such that the distribution exhibits lots of modes separated by
large low-probability regions, which makes it a decently hard task despite the
2D nature of the dataset.

We trained ALI and GAN on 100,000 $q(\bm{x})$ samples. The decoder and
discriminator architectures are identical between ALI and GAN (except for the
input of the discriminator, which receives the concatenation of $\bf{x}$ and
$\bf{z}$ in the ALI case). Each model was trained 10 times using Adam
\citep{kingma2014adam} with random learning rate and $\beta_1$ values, and the
weights were initialized by drawing from a gaussian distribution with a random
standard deviation.

We measured the extent to which the trained models covered all 25 modes by
drawing 10,000 samples from their $p(\bm{x})$ distribution and assigning each
sample to a $q(\bm{x}$) mixture component according to the mixture
responsibilities. We defined a dropped mode as one that wasn’t assigned to {\em
any} sample. Using this definition, we found that ALI models covered $13.4 \pm
5.8$ modes on average (min: 8, max: 25) while GAN models covered $10.4 \pm 9.2$
modes on average (min: 1, max: 22).

We then selected the best-covering ALI and GAN models, and the GAN model was
augmented with an encoder using the {\em learned inverse mapping} and {\em
post-hoc learned inference} procedures outlined in \autoref{sec:alternative}.
The encoders learned for GAN inference have the same architecture as ALI’s
encoder.  We also trained a VAE with the same encoder-decoder architecture as
ALI to outline the qualitative differences between ALI and VAE models.

We then compared each model’s inference capabilities by reconstructing 10,000
held-out samples from $q(\bm{x})$. \autoref{fig:mixture_plot} summarizes the
experiment. We observe the following:
\begin{itemize}
    \item The ALI encoder models a marginal distribution $q(\bm{z})$ that
        matches $\bm{p(z)}$ fairly well (row 2, column a). The learned
        representation does a decent job at clustering and organizing the
        different mixture components.
    \item The GAN generator (row 5, columns b-c) has more trouble reaching all
        the modes than the ALI generator (row 5, column a), even over 10 runs of
        hyperparameter search.
    \item Learning an inverse mapping from GAN samples does not work very well:
        the encoder has trouble covering the prior marginally and the way it
        clusters mixture components is not very well organized (row 2, column
        b). As discussed in \autoref{sec:alternative}, reconstructions suffer
        from the generator dropping modes.
    \item Learning inference post-hoc doesn't work as well as training the
        encoder and the decoder jointly. As had been hinted at in
        \autoref{sec:alternative}, it appears that adversarial training benefits
        from learning inference at training time in terms of mode coverage. This
        also negatively impacts how the latent space is organized (row 2, column
        c). However, it appears to be better at matching $q(\bm{z})$ and
        $p(\bm{z})$ than when inference is learned through inverse mapping from
        GAN samples.
    \item Due to the nature of the loss function being optimized, the VAE model
        covers all modes easily (row 5, column d) and excels at reconstructing
        data samples (row 3, column d). However, they have a much more
        pronounced tendency to smear out their probability density (row 5,
        column d) and leave ``holes'' in $\bf{q(z)}$ (row 2, column d). Note
        however that recent approaches such as Inverse Autoregressive Flow
        \citep{kingma2016improving} may be used to improve on this, at the cost
        of a more complex mathematical framework.
\end{itemize}

In summary, this experiment provides evidence that adversarial training benefits
from learning an inference mechanism jointly with the decoder.  Furthermore, it
shows that our proposed approach for learning inference in an adversarial
setting is superior to the other approaches investigated.

\section{Conclusion}

We introduced the adversarially learned inference (ALI) model, which jointly
learns a generation network and an inference network using an adversarial
process. The model learns mutually coherent inference and generation networks,
as exhibited by its reconstructions. The induced latent variable mapping is
shown to be useful, achieving results competitive with the state-of-the-art
on the semi-supervised SVHN and CIFAR10 tasks.

\subsubsection*{Acknowledgments}

The authors would like to acknowledge the support of the following agencies for
research funding and computing support: NSERC, Calcul Qu\'{e}bec, Compute
Canada. We would also like to thank the developers of Theano
\citep{bergstra2010theano,bastien2012theano,theano2016theano}, Blocks and Fuel
\citep{van2015blocks}, which were used extensively for the paper. Finally, we
would like to thank Yoshua Bengio, David Warde-Farley, Yaroslav Ganin and
Laurent Dinh for their valuable feedback.

\bibliography{bibliography}
\bibliographystyle{iclr2017_conference}

\clearpage

\appendix
\section{Hyperparameters}

\begin{table}[h]
\centering
\begin{tabular}{@{}rllllll@{}} \toprule
Operation              & Kernel       & Strides      & Feature maps & BN?      & Dropout & Nonlinearity \\ \midrule
$G_z(x)$ -- $3 \times 32 \times 32$ input                                                             \\
Convolution            & $5 \times 5$ & $1 \times 1$ & $32$         & $\surd$  & 0.0     & Leaky ReLU \\
Convolution            & $4 \times 4$ & $2 \times 2$ & $64$         & $\surd$  & 0.0     & Leaky ReLU \\
Convolution            & $4 \times 4$ & $1 \times 1$ & $128$        & $\surd$  & 0.0     & Leaky ReLU \\
Convolution            & $4 \times 4$ & $2 \times 2$ & $256$        & $\surd$  & 0.0     & Leaky ReLU \\
Convolution            & $4 \times 4$ & $1 \times 1$ & $512$        & $\surd$  & 0.0     & Leaky ReLU \\
Convolution            & $1 \times 1$ & $1 \times 1$ & $512$        & $\surd$  & 0.0     & Leaky ReLU \\
Convolution            & $1 \times 1$ & $1 \times 1$ & $128$        & $\times$ & 0.0     & Linear     \\
$G_x(z)$ -- $64 \times 1 \times 1$ input                                                              \\
Transposed convolution & $4 \times 4$ & $1 \times 1$ & $256$        & $\surd$  & 0.0     & Leaky ReLU \\
Transposed convolution & $4 \times 4$ & $2 \times 2$ & $128$        & $\surd$  & 0.0     & Leaky ReLU \\
Transposed convolution & $4 \times 4$ & $1 \times 1$ & $64$         & $\surd$  & 0.0     & Leaky ReLU \\
Transposed convolution & $4 \times 4$ & $2 \times 2$ & $32$         & $\surd$  & 0.0     & Leaky ReLU \\
Transposed convolution & $5 \times 5$ & $1 \times 1$ & $32$         & $\surd$  & 0.0     & Leaky ReLU \\
Convolution            & $1 \times 1$ & $1 \times 1$ & $32$         & $\surd$  & 0.0     & Leaky ReLU \\
Convolution            & $1 \times 1$ & $1 \times 1$ & $3$          & $\times$ & 0.0     & Sigmoid    \\
$D(x)$ -- $3 \times 32 \times 32$ input                                                               \\
Convolution            & $5 \times 5$ & $1 \times 1$ & $32$         & $\times$ & 0.2     & Maxout     \\
Convolution            & $4 \times 4$ & $2 \times 2$ & $64$         & $\times$ & 0.5     & Maxout     \\
Convolution            & $4 \times 4$ & $1 \times 1$ & $128$        & $\times$ & 0.5     & Maxout     \\
Convolution            & $4 \times 4$ & $2 \times 2$ & $256$        & $\times$ & 0.5     & Maxout     \\
Convolution            & $4 \times 4$ & $1 \times 1$ & $512$        & $\times$ & 0.5     & Maxout     \\
$D(z)$ -- $64 \times 1 \times 1$ input                                                                \\
Convolution            & $1 \times 1$ & $1 \times 1$ & $512$        & $\times$ & 0.2     & Maxout     \\
Convolution            & $1 \times 1$ & $1 \times 1$ & $512$        & $\times$ & 0.5     & Maxout     \\
$D(x, z)$ -- $1024 \times 1 \times 1$ input                                                           \\
\multicolumn{7}{@{}c@{}}{\em Concatenate $D(x)$ and $D(z)$ along the channel axis}                    \\
Convolution            & $1 \times 1$ & $1 \times 1$ & $1024$       & $\times$ & 0.5     & Maxout     \\
Convolution            & $1 \times 1$ & $1 \times 1$ & $1024$       & $\times$ & 0.5     & Maxout     \\
Convolution            & $1 \times 1$ & $1 \times 1$ & $1$          & $\times$ & 0.5     & Sigmoid    \\ \midrule
Optimizer              & \multicolumn{6}{@{}l@{}}{Adam ($\alpha = 10^{-4}$, $\beta_1 = 0.5$, $\beta_2 = 10^{-3}$)} \\
Batch size             & \multicolumn{6}{@{}l@{}}{100}												  \\
Epochs                 & \multicolumn{6}{@{}l@{}}{6475}  											  \\
Leaky ReLU slope, maxout pieces       & \multicolumn{6}{@{}l@{}}{0.1, 2}                                                \\
Weight, bias initialization  & \multicolumn{6}{@{}l@{}}{Isotropic gaussian ($\mu = 0$, $\sigma = 0.01$), Constant($0$)} \\ \bottomrule
\end{tabular}
\vspace{0.2cm}
\caption{\label{tab:cifar10_description} CIFAR10 model hyperparameters (unsupervised). Maxout
    layers \citep{goodfellow2013maxout} are used in the discriminator.}
\end{table}

\begin{table}[h]
\centering
\begin{tabular}{@{}rllllll@{}} \toprule
Operation              & Kernel       & Strides      & Feature maps & BN?          & Dropout & Nonlinearity \\ \midrule
$G_z(x)$ -- $3 \times 32 \times 32$ input                                                                 \\
Convolution            & $5 \times 5$ & $1 \times 1$ & $32$         & $\surd$      & 0.0     & Leaky ReLU \\
Convolution            & $4 \times 4$ & $2 \times 2$ & $64$         & $\surd$      & 0.0     & Leaky ReLU \\
Convolution            & $4 \times 4$ & $1 \times 1$ & $128$        & $\surd$      & 0.0     & Leaky ReLU \\
Convolution            & $4 \times 4$ & $2 \times 2$ & $256$        & $\surd$      & 0.0     & Leaky ReLU \\
Convolution            & $4 \times 4$ & $1 \times 1$ & $512$        & $\surd$      & 0.0     & Leaky ReLU \\
Convolution            & $1 \times 1$ & $1 \times 1$ & $512$        & $\surd$      & 0.0     & Leaky ReLU \\
Convolution            & $1 \times 1$ & $1 \times 1$ & $512$        & $\times$     & 0.0     & Linear     \\
$G_x(z)$ -- $256 \times 1 \times 1$ input                                                                 \\
Transposed convolution & $4 \times 4$ & $1 \times 1$ & $256$        & $\surd$      & 0.0     & Leaky ReLU \\
Transposed convolution & $4 \times 4$ & $2 \times 2$ & $128$        & $\surd$      & 0.0     & Leaky ReLU \\
Transposed convolution & $4 \times 4$ & $1 \times 1$ & $64$         & $\surd$      & 0.0     & Leaky ReLU \\
Transposed convolution & $4 \times 4$ & $2 \times 2$ & $32$         & $\surd$      & 0.0     & Leaky ReLU \\
Transposed convolution & $5 \times 5$ & $1 \times 1$ & $32$         & $\surd$      & 0.0     & Leaky ReLU \\
Convolution            & $1 \times 1$ & $1 \times 1$ & $32$         & $\surd$      & 0.0     & Leaky ReLU \\
Convolution            & $1 \times 1$ & $1 \times 1$ & $3$          & $\times$     & 0.0     & Sigmoid    \\
$D(x)$ -- $3 \times 32 \times 32$ input                                                                   \\
Convolution            & $5 \times 5$ & $1 \times 1$ & $32$         & $\times$     & 0.2     & Leaky ReLU \\
Convolution            & $4 \times 4$ & $2 \times 2$ & $64$         & $\surd$      & 0.2     & Leaky ReLU \\
Convolution            & $4 \times 4$ & $1 \times 1$ & $128$        & $\surd$      & 0.2     & Leaky ReLU \\
Convolution            & $4 \times 4$ & $2 \times 2$ & $256$        & $\surd$      & 0.2     & Leaky ReLU \\
Convolution            & $4 \times 4$ & $1 \times 1$ & $512$        & $\surd$      & 0.2     & Leaky ReLU \\
$D(z)$ -- $256 \times 1 \times 1$ input                                                                   \\
Convolution            & $1 \times 1$ & $1 \times 1$ & $512$        & $\times$     & 0.2     & Leaky ReLU \\
Convolution            & $1 \times 1$ & $1 \times 1$ & $512$        & $\times$     & 0.2     & Leaky ReLU \\
$D(x, z)$ -- $1024 \times 1 \times 1$ input                                                               \\
\multicolumn{7}{@{}c@{}}{\em Concatenate $D(x)$ and $D(z)$ along the channel axis}                        \\
Convolution            & $1 \times 1$ & $1 \times 1$ & $1024$       & $\times$     & 0.2     & Leaky ReLU \\
Convolution            & $1 \times 1$ & $1 \times 1$ & $1024$       & $\times$     & 0.2     & Leaky ReLU \\
Convolution            & $1 \times 1$ & $1 \times 1$ & $1$          & $\times$     & 0.2     & Sigmoid    \\ \midrule
Optimizer              & \multicolumn{6}{@{}l@{}}{Adam ($\alpha = 10^{-4}$, $\beta_1 = 0.5$, $\beta_2 = 10^{-3}$)}  \\
Batch size             & \multicolumn{6}{@{}l@{}}{100}												      \\
Epochs                 & \multicolumn{6}{@{}l@{}}{100}												      \\
Leaky ReLU slope       & \multicolumn{6}{@{}l@{}}{0.01}                                                   \\
Weight, bias initialization  & \multicolumn{6}{@{}l@{}}{Isotropic gaussian ($\mu = 0$, $\sigma = 0.01$), Constant($0$)} \\ \bottomrule
\end{tabular}
\vspace{0.2cm}
\caption{\label{tab:svhn_description} SVHN model hyperparameters (unsupervised).}
\end{table}

\begin{table}[h]
\centering
\begin{tabular}{@{}rllllll@{}} \toprule
Operation              & Kernel       & Strides      & Feature maps & BN?          & Dropout & Nonlinearity \\ \midrule
$G_z(x)$ -- $3 \times 64 \times 64$ input                                                                 \\
Convolution            & $2 \times 2$ & $1 \times 1$ & $64$         & $\surd$      & 0.0     & Leaky ReLU \\
Convolution            & $7 \times 7$ & $2 \times 2$ & $128$        & $\surd$      & 0.0     & Leaky ReLU \\
Convolution            & $5 \times 5$ & $2 \times 2$ & $256$        & $\surd$      & 0.0     & Leaky ReLU \\
Convolution            & $7 \times 7$ & $2 \times 2$ & $256$        & $\surd$      & 0.0     & Leaky ReLU \\
Convolution            & $4 \times 4$ & $1 \times 1$ & $512$        & $\surd$      & 0.0     & Leaky ReLU \\
Convolution            & $1 \times 1$ & $1 \times 1$ & $512$        & $\times$     & 0.0     & Linear     \\
$G_x(z)$ -- $512 \times 1 \times 1$ input                                                                 \\
Transposed convolution & $4 \times 4$ & $1 \times 1$ & $512$        & $\surd$      & 0.0     & Leaky ReLU \\
Transposed convolution & $7 \times 7$ & $2 \times 2$ & $256$        & $\surd$      & 0.0     & Leaky ReLU \\
Transposed convolution & $5 \times 5$ & $2 \times 2$ & $256$         & $\surd$      & 0.0     & Leaky ReLU \\
Transposed convolution & $7 \times 7$ & $2 \times 2$ & $128$         & $\surd$      & 0.0     & Leaky ReLU \\
Transposed convolution & $2 \times 2$ & $1 \times 1$ & $64$         & $\surd$      & 0.0     & Leaky ReLU \\
Convolution            & $1 \times 1$ & $1 \times 1$ & $3$          & $\times$     & 0.0     & Sigmoid    \\
$D(x)$ -- $3 \times 64 \times 64$ input                                                                   \\
Convolution            & $2 \times 2$ & $1 \times 1$ & $64$         & $\surd$      & 0.0     & Leaky ReLU \\
Convolution            & $7 \times 7$ & $2 \times 2$ & $128$        & $\surd$      & 0.0     & Leaky ReLU \\
Convolution            & $5 \times 5$ & $2 \times 2$ & $256$        & $\surd$      & 0.0     & Leaky ReLU \\
Convolution            & $7 \times 7$ & $2 \times 2$ & $256$        & $\surd$      & 0.0     & Leaky ReLU \\
Convolution            & $4 \times 4$ & $1 \times 1$ & $512$        & $\surd$      & 0.0     & Leaky ReLU \\
$D(z)$ -- $512 \times 1 \times 1$ input                                                                   \\
Convolution            & $1 \times 1$ & $1 \times 1$ & $1024$        & $\times$     & 0.2     & Leaky ReLU \\
Convolution            & $1 \times 1$ & $1 \times 1$ & $1024$        & $\times$     & 0.2     & Leaky ReLU \\
$D(x, z)$ -- $1024 \times 1 \times 1$ input                                                               \\
\multicolumn{7}{@{}c@{}}{\em Concatenate $D(x)$ and $D(z)$ along the channel axis}                        \\
Convolution            & $1 \times 1$ & $1 \times 1$ & $2048$       & $\times$     & 0.2     & Leaky ReLU \\
Convolution            & $1 \times 1$ & $1 \times 1$ & $2048$       & $\times$     & 0.2     & Leaky ReLU \\
Convolution            & $1 \times 1$ & $1 \times 1$ & $1$          & $\times$     & 0.2     & Sigmoid    \\ \midrule
Optimizer              & \multicolumn{6}{@{}l@{}}{Adam ($\alpha = 10^{-4}$, $\beta_1 = 0.5$)}  \\
Batch size             & \multicolumn{6}{@{}l@{}}{100}												      \\
Epochs                 & \multicolumn{6}{@{}l@{}}{123}												      \\
Leaky ReLU slope       & \multicolumn{6}{@{}l@{}}{0.02}                                                   \\
Weight, bias initialization  & \multicolumn{6}{@{}l@{}}{Isotropic gaussian ($\mu = 0$, $\sigma = 0.01$), Constant($0$)} \\ \bottomrule
\end{tabular}
\vspace{0.2cm}
\caption{\label{tab:celeba_description} CelebA model hyperparameters (unsupervised).}
\end{table}

\begin{table}[h]
\centering
\begin{tabular}{@{}rllllll@{}} \toprule
Operation              & Kernel       & Strides      & Feature maps & BN?          & Dropout & Nonlinearity \\ \midrule
$G_z(x)$ -- $3 \times 64 \times 64$ input                                                                 \\
Convolution            & $4 \times 4$ & $2 \times 2$ & $64$         & $\surd$      & 0.0     & Leaky ReLU \\
Convolution            & $4 \times 4$ & $1 \times 1$ & $64$         & $\surd$      & 0.0     & Leaky ReLU \\
Convolution            & $4 \times 4$ & $2 \times 2$ & $128$        & $\surd$      & 0.0     & Leaky ReLU \\
Convolution            & $4 \times 4$ & $1 \times 1$ & $128$        & $\surd$      & 0.0     & Leaky ReLU \\
Convolution            & $4 \times 4$ & $2 \times 2$ & $256$        & $\surd$      & 0.0     & Leaky ReLU \\
Convolution            & $4 \times 4$ & $1 \times 1$ & $256$        & $\surd$      & 0.0     & Leaky ReLU \\
Convolution            & $1 \times 1$ & $1 \times 1$ & $2048$       & $\surd$      & 0.0     & Leaky ReLU \\
Convolution            & $1 \times 1$ & $1 \times 1$ & $2048$       & $\surd$      & 0.0     & Leaky ReLU \\
Convolution            & $1 \times 1$ & $1 \times 1$ & $512$        & $\times$     & 0.0     & Linear     \\
$G_x(z)$ -- $256 \times 1 \times 1$ input                                                                 \\
Convolution            & $1 \times 1$ & $1 \times 1$ & $2048$       & $\surd$      & 0.0     & Leaky ReLU \\
Convolution            & $1 \times 1$ & $1 \times 1$ & $256$        & $\surd$      & 0.0     & Leaky ReLU \\
Transposed convolution & $4 \times 4$ & $1 \times 1$ & $256$        & $\surd$      & 0.0     & Leaky ReLU \\
Transposed convolution & $4 \times 4$ & $2 \times 2$ & $128$        & $\surd$      & 0.0     & Leaky ReLU \\
Transposed convolution & $4 \times 4$ & $1 \times 1$ & $128$        & $\surd$      & 0.0     & Leaky ReLU \\
Transposed convolution & $4 \times 4$ & $2 \times 2$ & $64$         & $\surd$      & 0.0     & Leaky ReLU \\
Transposed convolution & $4 \times 4$ & $1 \times 1$ & $64$         & $\surd$      & 0.0     & Leaky ReLU \\
Transposed convolution & $4 \times 4$ & $2 \times 2$ & $64$         & $\surd$      & 0.0     & Leaky ReLU \\
Convolution            & $1 \times 1$ & $1 \times 1$ & $3$          & $\times$     & 0.0     & Sigmoid    \\
$D(x)$ -- $3 \times 64 \times 64$ input                                                                   \\
Convolution            & $4 \times 4$ & $2 \times 2$ & $64$         & $\times$     & 0.2     & Leaky ReLU \\
Convolution            & $4 \times 4$ & $1 \times 1$ & $64$         & $\surd$      & 0.2     & Leaky ReLU \\
Convolution            & $4 \times 4$ & $2 \times 2$ & $128$        & $\surd$      & 0.2     & Leaky ReLU \\
Convolution            & $4 \times 4$ & $1 \times 1$ & $128$        & $\surd$      & 0.2     & Leaky ReLU \\
Convolution            & $4 \times 4$ & $2 \times 2$ & $256$        & $\surd$      & 0.2     & Leaky ReLU \\
Convolution            & $4 \times 4$ & $1 \times 1$ & $256$        & $\surd$      & 0.2     & Leaky ReLU \\
$D(z)$ -- $256 \times 1 \times 1$ input                                                                   \\
Convolution            & $1 \times 1$ & $1 \times 1$ & $2048$       & $\times$     & 0.2     & Leaky ReLU \\
Convolution            & $1 \times 1$ & $1 \times 1$ & $2048$       & $\times$     & 0.2     & Leaky ReLU \\
$D(x, z)$ -- $2304 \times 1 \times 1$ input                                                               \\
\multicolumn{7}{@{}c@{}}{\em Concatenate $D(x)$ and $D(z)$ along the channel axis}                        \\
Convolution            & $1 \times 1$ & $1 \times 1$ & $4096$       & $\times$     & 0.2     & Leaky ReLU \\
Convolution            & $1 \times 1$ & $1 \times 1$ & $4096$       & $\times$     & 0.2     & Leaky ReLU \\
Convolution            & $1 \times 1$ & $1 \times 1$ & $1$          & $\times$     & 0.2     & Sigmoid    \\ \midrule
Optimizer              & \multicolumn{6}{@{}l@{}}{Adam ($\alpha = 10^{-4}$, $\beta_1 = 0.5$, $\beta_2 = 10^{-3}$)}  \\
Batch size             & \multicolumn{6}{@{}l@{}}{128}												      \\
Epochs                 & \multicolumn{6}{@{}l@{}}{125}												      \\
Leaky ReLU slope       & \multicolumn{6}{@{}l@{}}{0.01}                                                   \\
Weight, bias initialization  & \multicolumn{6}{@{}l@{}}{Isotropic gaussian ($\mu = 0$, $\sigma = 0.01$), Constant($0$)} \\ \bottomrule
\end{tabular}
\vspace{0.2cm}
\caption{\label{tab:tiny_imagenet_description} Tiny ImageNet model hyperparameters (unsupervised).}
\end{table}

\clearpage
\section{A generative story for ALI}

\begin{figure}[h]
    \centering
    \begin{tikzpicture}[remember picture,node distance=2cm,
                        box/.style={draw,rectangle,rounded corners}]
        \node[box,rectangle] (q) {
            \begin{tikzpicture}
                \node (x) {\small Xavier's painting};
                \node[above=of x] (z_hat) {Zelda's description};
            \end{tikzpicture}
        };
        \node[box,minimum height=1cm,minimum width=2cm,right=of q]
            (discriminator) {Mr. Discriminator};
        \node[box,right= of discriminator] (p) {
            \begin{tikzpicture}
                \node (x_tilde) {Xena's depiction};
                \node[above=of x_tilde] (z) {Zach's description};
            \end{tikzpicture}
        };
        \draw[->] (x) -- (z_hat);
        \draw[->] (z) -- (x_tilde);
        \draw[->] (q) -- (discriminator);
        \draw[->] (p) -- (discriminator);
    \end{tikzpicture}
    \caption{\label{fig:model_story} A Circle of Infinite Painters' view of the
        ALI game.}
\end{figure}
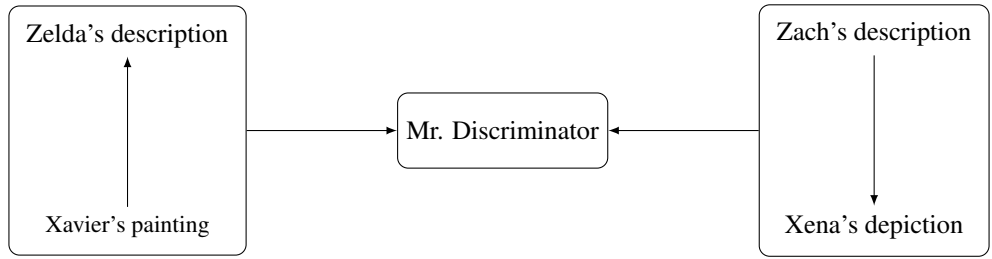

The Circle of Infinite Painters is a very prolific artistic group. Very little
is known about the Circle, but what we do know is that it is composed of two
very brilliant artists. It has produced new paintings almost daily for more than
twenty years, each one more beautiful than the others. Not only are the
paintings exquisite, but their title and description is by itself a literary
masterpiece.

However, some scholars believe that things might not be as they appear: certain
discrepancies in the Circle's body of work hints at the Circle being composed of
more than one artistic duo. This is what Joseph Discriminator, art critique and
world expert on the Circle, believes. He's recently been working intensively on
the subject. Without knowing it, he's right: the Circle is not one, but two
artistic duos.

Xavier and Zach Prior form the creative component of the group. Xavier is a
painter and can, in one hour and starting from nothing, produce a painting that
would make any great painter jealous. Impossible however for him to explain
what he's done: he works by intuition alone. Zach is an author and his literary
talent equals Xavier's artistic talent. His verb is such that the scenes he
describes could just as well be real.

By themselves, the Prior brothers cannot collaborate: Xavier can't paint
anything from a description and Zach is bored to death with the idea of
describing anything that does not come out of his head. This is why the Prior
brothers depend on the Conditional sisters so much.

Zelda Conditional has an innate descriptive talent: she can examine a painting
and describe it so well that the original would seem like an imitation. Xena
Conditional has a technical mastery of painting that allows her to recreate
everything that's described to her in the most minute details. However, their
creativity is inversely proportional to their talent: by themselves, they cannot
produce anything of interest.

As such, the four members of the Circle work in pairs. What Xavier paints, Zelda
describes, and what Zach describes, Xena paints. They all work together to
fulfill the same vision of a unified Circle of Infinite Painters, a whole greater
than the sum of its parts.

This is why Joseph Discriminator's observations bother them so much. Secretly,
the Circle put Mr. Discriminator under surveillance. Whatever new observation
he's made, they know right away and work on attenuating the differences to
maintain the illusion of a Circle of Infinite Painters made of a single artistic
duo.

Will the Circle reach this ideal, or will it be unmasked by Mr. Discriminator?

\end{document}